\newcommand{\algorithmicparameters}{\textbf{parameters}}
\newcommand{\PARAMETERS}{\item[\algorithmicparameters]}
\newcommand\numberthis{\addtocounter{equation}{1}\tag{\theequation}}
\theoremstyle{definition}
\newtheorem{thm}{Theorem}
\newtheorem{lem}{Lemma}
\DeclareMathOperator*{\argmax}{arg\,max}
\DeclareMathOperator*{\expit}{expit}
\newcommand{\Real}{\mathbb{R}}
\newcommand{\norm}[1]{\left\lVert#1\right\rVert}
\newcommand{\wS}{{S_w}}
\newcommand{\DS}{\mathcal{D}_S}
\newcommand{\DT}{\mathcal{D}_T}
\newcommand{\DZ}{\mathcal{D}_Z}
\newcommand{\Ddata}{\mathcal{D}_\text{data}}
\newcommand{\Hfamily}{\mathcal{H}}
\newcommand{\Ffamily}{\mathcal{F}}
\newcommand{\lzo}{l_{\text{0-1}}}
\newcommand{\Comb}{C}
\newcommand{\Xspace}{\mathcal{X}}
\newcommand{\foutcome}{f_Y}
\newcommand{\Mfoutcome}{M_{Y}}
\newcommand{\dHapprox}{d_{\Hfamily}}
\newcommand{\cY}[1]{Y^{#1}}  
\newcommand{\expect}[2]{{\mathop\mathbb{E}}_{#1}\left[#2\right]} 
\newcommand{\expectest}[2]{{\mathop\mathbb{\widehat{E}}}_{#1}[#2]} 
\newcommand{\bigCI}{\mathrel{\text{\scalebox{1.07}{$\perp\mkern-10mu\perp$}}}}
\newcommand{\wa}{\omega^a}
\newcommand{\dist}{\mathcal{D}}
\newcommand{\Dp}{\dist}
\newcommand{\bx}{{\bf x}}
\newcommand{\bw}{{\bf w}}
\newcommand{\bc}{{\bf c}}
\newcommand{\bz}{{\bf z}}
\newcommand{\be}{{\bf e}}
\newcommand{\bu}{{\bf u}}
\newcommand{\lab}{\bc}
\newcommand{\disc}{d} 
\newcommand{\loss}{L}
\newcommand{\lossn}{\loss_n}
\newcommand{\idx}{\mathcal{I}}
\newcommand{\indicator}[1]{\mathbbm{1}\left[#1\right]}
\icmltitlerunning{Adversarial Balancing for Causal Inference}
\begin{document}

\twocolumn[
\icmltitle{Adversarial Balancing for Causal Inference}




\icmlsetsymbol{equal}{*}

\begin{icmlauthorlist}
\icmlauthor{Michal Ozery-Flato}{equal,hrl}
\icmlauthor{Pierre Thodoroff}{equal,mcgill} 
\icmlauthor{Matan Ninio}{hrl}
\icmlauthor{Michal Rosen-Zvi}{hrl}
\icmlauthor{Tal El-Hay}{hrl}
\end{icmlauthorlist}

\icmlaffiliation{hrl}{IBM Research}
\icmlaffiliation{mcgill}{McGill University; This work was done while the author was an intern in IBM Research-Haifa}

\icmlcorrespondingauthor{Tal El-Hay}{talelh@il.ibm.com}

\icmlkeywords{Machine Learning, ICML}

\vskip 0.3in
]



\printAffiliationsAndNotice{\icmlEqualContribution} 

\begin{abstract}
Biases in observational data of treatments pose a major challenge to estimating expected treatment outcomes in different populations. 
An important technique that accounts for these biases is reweighting samples to minimize the discrepancy between treatment groups. 
We present a novel reweighting approach that uses bi-level optimization to alternately train a discriminator to minimize classification error, and a balancing weights generator that uses exponentiated gradient descent to maximize this error.
This approach borrows principles from generative adversarial networks (GANs) to exploit the power of classifiers for measuring two-sample divergence. 
We provide theoretical results for conditions in which the estimation error is bounded by two factors: (i) the discrepancy measure induced by the discriminator; and (ii) the weights variability.
Experimental results on several benchmarks comparing to previous state-of-the-art reweighting methods demonstrate the effectiveness of this approach in estimating causal effects.
\end{abstract}

\section{Introduction}

Causal inference deals with estimating expected outcomes for treatments or interventions. The gold standard for causal inference studies is randomized controlled trials (RCTs), in which treatment and control groups come from the same data distribution, due to a randomized treatment assignment process. However, RCTs are often costly, sometimes impractical to implement, and may raise ethical questions. An appealing alternative is to infer expected treatment outcomes using the abundance of observational treatment data. Alas,
in such data treatment populations are likely to differ from each other. This difference, or bias, between treatment groups and the lack of knowledge on the treatment assignment mechanism, hinder the inference of  expected outcomes for the treatment in the entire  population.

A common approach for inferring expected treatment outcomes from observational treatment data is by balancing the bias between treatment groups via reweighting of the individuals in these groups.  The challenging task of computing balancing weights has applications not only for causal inference, but also for transfer learning, and is highly related to the field of density ratio-estimation (see Related work section).   

Motivated by the immense success of generative adversarial networks (GANs) in producing simulated data that highly resembles real world samples, we propose a novel framework\footnote{Our code is available at \url{https://github.com/IBM/causallib/tree/master/causallib/contrib/adversarial_balancing}} that adapts the objective of GANs to the task of generating balancing weights. Similar to GANs, our framework is based on a two-player game that involves
a discriminator that measures the bias, or discrepancy, between two data samples, and a generator that aims to produce data indistinguishable, by the discriminator, from another given dataset. The key difference from GANs is that our data generator produces "new" data by reweighting a given dataset, where the weights are obtained by a simple step of exponentiated gradient ascent step on the discriminator loss.  Using this framework allows us to harness the complete arsenal of classification methods to the task of generating balancing weights. We evaluated the performance of this algorithm on a range of published causal-inference benchmarks, and assessed the ability to select an appropriate classifier for the input datasets in a standard cross-validation routine.

\section{Problem setup}

Consider a population where each individual received a single treatment from a finite set of treatments $\mathcal{A}$.  The received treatment and the resulting outcome for every individual are indicated by the variables $A$ and $Y$, respectively. For every treatment $a\in\mathcal{A}$ , $\cY{a}$ denotes the potential outcome for the treatment. The variable $\cY{a}$ is observed only when $A=a$. Let $X$ denote the vector of observed pre-treatment covariates used to characterize the individuals. Let $\Dp$ be the distribution over $(X,A,\{Y^a\}_{a\in \mathcal{A}})$ in the population.  
The expected outcome of a treatment $a\in\mathcal{A}$ in the population is:
\begin{equation}
\expect{\cY{a} \sim \Dp}{\cY{a}} 
=
\expect{X  \sim \Dp(X)}{\expect{\cY{a} \sim \dist(\cY{a}|X)}{ \cY{a}|X}} \,. 
\label{eq:1}
\end{equation}
For brevity, we denote $\expect{}{\cY{a}}\equiv\expect{\cY{a} \sim \Dp}{\cY{a}}$.

The goal of many observational studies is to estimate $\expect{}{\cY{a}} $ from a finite data sample from $\Dp$.  However, $\cY{a}$ is observed only in the subpopulation that actually received treatment $a$, where the distribution over $X$ is $\Dp(X|A=a) \neq  \Dp(X)$.  To overcome this hurdle, we employ the standard assumptions of {\it strong ignorability}: $\cY{a} \bigCI  A | X$, and {\it positivity}: $0<p(A=a|X=x)<1$,  $\forall a \in \mathcal{A}$ \cite{rosenbaum1983central}. Strong ignorability, often stated as "no hidden confounders", means that the observed covariates contain all the information that may affect treatment assignment. These assumptions allow rewriting Equation \ref{eq:1}:
\begin{equation}\label{eq:3}
\expect{}{\cY{a}} = \expect{X \sim \Dp(X)}{\expect{Y\sim \dist(Y|X,A=a)}{Y|X,A=a}}\,.\\ 
\end{equation}
Equation \ref{eq:3} suggests that $\expect{}{\cY{a}}$ can be estimated by a sample from the subpopulation corresponding to $A=a$ under the condition that its distribution over $X$ is $\Dp(X)$.  A common approach to handle this sampling challenge is to use a weighting function $\wa(X)$ such that $\Dp(X|A=a) \wa(X) = \Dp(X)$. The weighting function that satisfies this condition is clearly  $\wa(X) = \frac{\Dp(X)}{\Dp(X|A=a)}=\frac{\Dp(A=a)}{\Dp(A=a|X)}$ and therefore
\begin{equation}\label{eq:ipw_est}
\expect{}{\cY{a}}
= \expect{X \sim \Dp(X|a)}{ \wa(X) \expect{Y\sim \dist(Y|X,A=a)}{Y|X,A=a}} \,.
\end{equation}
\newcommand{\sample}{S}
Given  a finite sample $\sample = \{ (x_i, a_i, y_i)\}_{i=1}^N$ from $\Dp$ we would like to produce weights $w_i$ for each $i \in \{ i:a_i=a\}$ that approximate $\wa(X_i)$. Following Equation \ref{eq:ipw_est}, given such weights we estimate $\expect{}{ \cY{a}}$ by:
\begin{equation}
\label{eq:outcomeEstimator}
\expectest{}{ \cY{a}} = \sum_{i:a_i=a} w_i y_i \,.
\end{equation}

\section{Background on adversarial framework for learning generative models}

The adversarial framework, which was introduced by Goodfellow et al. \cite{goodfellow2014generative}, aims to learn a generative model of an unknown distribution $\Ddata$ using a class of discriminators that gauge the similarity between data distributions. This framework can be described as a game in which a generator simulates data and a discriminator tries to distinguish samples of true data from simulate data samples. The generator employs generative models with an input random variable $Z$ from a predefined distribution $\DZ$ and a deterministic mapping $g(\bz)$ to the data space $\Xspace$. Simulated data are generated by sampling data from $\DZ$ and transforming them through $g$. At the end of each round of the game, the generator observes the predictions of the discriminator and updates the model for $g(\bz)$.  Given the generator model $g(\bz)$ the prediction model of the  discriminator, $\disc(\bx)$, attempts to minimize the expected classification error in the real and simulated samples : 
\begin{align*}\label{eq:AdversarialLoss}
\loss(g, \disc) 
&= 
\expect{\bx\sim \Ddata}{l(\disc(\bx), 1)} + \expect{\bz\sim \DZ}{l(\disc(g(\bz)), 0)} 
\numberthis
\end{align*}
where $l$ is the loss function.  Given the prediction model, $\disc(\bx)$, the generator attempts to maximize the expected error, and its objective is to find
\begin{equation}
g^{*} = \argmax_{g} \left( \min_\disc \loss(g, \disc) \right)
\end{equation}
Examples for loss functions are the Log-loss ${l(\disc(x), c) = c \cdot \log\disc(x) + (1-c)\cdot \log (1-\disc(x))}$ , which was used in \cite{goodfellow2014generative}; and the 0-1 loss \footnote{$\indicator{c}$ is the indicator function which is $1$ is predicate $c$ if true, and $0$ otherwise.   
}, 
$
{
	\lzo(\disc(x),c)=\indicator{\indicator{\disc(x)> \frac{1}{2}}\neq c}
}
$,
which was used in \cite{gutmann2014likelihood, mohamed2016learning} for likelihood-free inference and in \cite{lopez2016revisiting} for classifier two-sample tests. In the next section we adapt the adversarial framework and its key principle of maximizing the discrimination loss to the task of generating balancing weights.

\section{Adversarial balancing weights}

In this section we present our adversarial framework for generating balancing weights, and a novel algorithm that applies it.  Similar to GAN, our goal is to generate a sample that resembles data coming from a distribution $\Dp(X)$.  However, while in the original GAN framework the generated sample is \emph{simulated} by applying a transformation on unlimited random data, our balancing framework is constrained to \emph{reweight} a finite samples from the distribution $\Dp(X|a)$.   More generally we consider the problem of reweighting a data sample coming from a source distribution $\DS$ on $X$ such that it becomes indistinguishable from a sample of a target distribution $\DT$.  
The input to our problem are two finite samples from the two distributions:
$$S = \{ \bx_i\}_{i=1}^n  \sim (\DS)^n ;  \; T = \{ \bx_i \}_{i=n+1}^{n+n'}  \sim (\DT)^{n'} \,.$$

This is a general framework for balancing with respect to any target population; therefore, it can be used to estimate different types of causal effects. For example, the {\it average treatment effect} (ATE) is defined as $\expect{X \sim \Dp(X)}{\cY{a=1}} - \expect{X\sim \Dp(X)}{\cY{a =0}}$. In this case we estimate  $\expect{X\sim \Dp(X)}{\cY{a}}$ using $\DS := \Dp(X|a)$ and $\DT := \Dp(X)$. 
Another example is the {\it average treatment effect in the treated} (ATT), which is defined as $\expect{X \sim \Dp(X|A=1)}{\cY{a=1}} - \expect{X \sim \Dp(X|A=1)}{\cY{a=0}}$.  In the latter example, we estimate $ \expect{X \sim \Dp(X|A=1)}{\cY{0}}$ by reweighting a sample from the distribution $\DS := \Dp(X|A=0)$ and using $\Dp(X|A=1)$ as the target distribution.

\begin{figure*}[tb]
	\includegraphics[scale=.5]{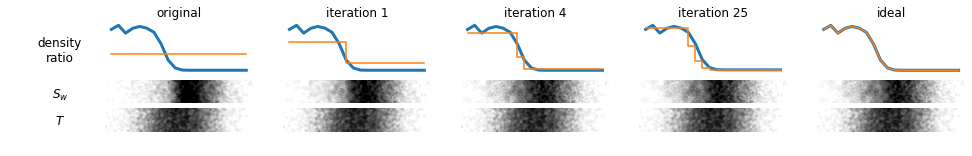}
	\caption{Illustration of the adverasarial balancing algorithm. The thick blue line represents the density ratio $w^*(x)\equiv\frac{\DT(x)}{\DS(x)}$. The thin orange line represents the estimated weights in different iterations. The bottom strip is a scatter plot of samples drawn from the target distribution (the density in the $x$-axis is $\DT$ and uniform in the $y$-axis for visualization purpose). This strip is constant in all iterations. The top strip shows a scatter plot of the source distribution where the size of point is proportional to its weight, thereby visualizing $\wS$. The algorithm starts by uniform weights and a high bias. At each iterations the weights are updated according to a classifier that minimizes $\lossn$ to maximize the loss in the next iteration.
	} 
	\label{fig:illustration}
\end{figure*}

\subsection{Discrepancy objective}

Let $w(X)$ be a non-negative function that reweights samples from $\DS$ resulting in a new distribution, $\dist_\wS(X) = w(X)\DS(X)$.  For $\dist_\wS$ to be a valid distribution, $w(X)$ must satisfy the constraint 
 \begin{equation}
 \label{eq:norm_w_function}
\expect{X \sim \DS}{w(X)}=1\,.
 \end{equation}
Using a similar loss to the one in Equation \ref{eq:AdversarialLoss} and replacing the generator distribution by $\dist_\wS(X)$, we obtain
\begin{equation}
\label{eq:expectedBalancingLoss}
\loss(w, \disc) = \expect{ X \sim \DT} 
{l(\disc(X), 1)} + \expect{X\sim \DS}{w(X) l(d(X), 0)}
\end{equation}

We can confine the representation of $w(X)$ to a family of models and optimize the loss with respect to this family. However, for the estimation problem defined in Equation \ref{eq:outcomeEstimator}, it suffices to infer point estimates of $w(X)$ for the given sample from $\DS$. We denote such point estimates by $w_i \equiv w(x_i)$
and use them in Equation \ref{eq:norm_w_function} to obtain the following normalization constraint: 
\begin{equation}
\frac{1}{n}\sum_{i=1}^n w_i = 1 \,.
\end{equation}
Similar to the constraint in  \ref{eq:norm_w_function}.  The discriminator error becomes 
\begin{equation}
\label{eq:nonParametricEmpiricaLoss}
\lossn(\bw, \disc) = \frac{1}{n'} \sum_{i=n+1}^{n+n'}  l(\disc(X_{i}),1) + \frac{1}{n} \sum_{i=1}^n w_i l(\disc(X_{i}),0) \,,
\end{equation}
where  $\bw\equiv (w_1, \dots, w_n)$. Note that Equation \ref{eq:nonParametricEmpiricaLoss} is the empirical error of the discriminator, when the samples from $\DS$ and $\DT$ are given the same importance. 
The aim of the discriminator is to minimize the error in Equation \ref{eq:nonParametricEmpiricaLoss}. Weights $\bw$ leading to large errors of the discriminator imply its inability to distinguish between the sample $T$ and the weighted samples $S$. We formulate the objective of the adversarial balancing framework as solving the following optimization problem:
\begin{equation}
\label{eq:adv_max_min_obj}
\bw^{*} = \argmax_{\frac{\bw}{n} \in \Delta} \left( \min_\disc \lossn(\bw, \disc) \right)
\end{equation}
where $\Delta$ be the unit simplex $\Delta = \{ \bu \in \Real^n : \bu\geq 0,  \;   \sum_{i=1}^n u_i=1 \}$. 

\subsection{Weight learning algorithm}

To search for a solution to the max-min objective in Equation \ref{eq:adv_max_min_obj}, we propose the following iterative process. At each step we train a discriminator to minimize the empirical loss of Equation \ref{eq:nonParametricEmpiricaLoss}. We then update the weights $w_i$ to increase this loss using a single step of exponentiated gradient descent \cite{kivinen1997exponentiated}, which maintains the weight normalization constraint. Figure \ref{fig:illustration} illustrates this process. 

We define the augmented labeled dataset by assigning a class label $0$ and weights to $\DS$, and a class label $1$ and uniform weights to $\DT$:
\begin{equation}\label{eq:10} 
\big\{ (\bx_i,0; w_i) \big\}_{i=1}^n \bigcup \big\{ (\bx_i,1; w_i=1) \big\}_{i=n+1}^{n+n'} \,.
\end{equation}
Note that the uniform weights we assigned to $\DT$ will not be modified by our algorithm.  The discriminator predicts the class label, $C$, of the samples in $\DT$ using a classifier $\disc(\bx) \in \Ffamily$, where $\Ffamily$ is a predefined classifier family.  Recall that the final objective of the adversarial framework is to find $\bw$ that maximizes the objective in Equation \ref{eq:adv_max_min_obj}. Following Equation \ref{eq:nonParametricEmpiricaLoss}, for a fixed classifier $d$, the generator's loss is linear in $\bw$ and $\frac{\partial \lossn}{\partial w_i} = l(\disc(\bx_{i}),0)$ is constant. To maximize the objective in Equation \ref{eq:adv_max_min_obj}, which refers to \emph{any} classifier from the considered family, we update the weights using a single step of exponentiated gradient ascent:
\begin{equation}
w^{t+1}_i = n \frac{ w^{t}_i \exp\big( \alpha \cdot l(\disc(x_{i}),0)\big) } {\sum_{j} w^{t}_j\exp\big( \alpha \cdot l(\disc(x_j),0)\big)}
\end{equation}
Algorithm \ref{alg:euclid} shows the complete details of the adversarial framework for non-parametric generation of balancing weights.
\begin{algorithm}[tb]
	\caption{Adversarial balancing weights}\label{alg:euclid}
	\begin{algorithmic}[1]
	\INPUT $S = \{ \bx_i\}_{i=1}^n$, $T = \{ \bx_i \}_{i=n+1}^{n+n'}$
	\PARAMETERS  classifier family $\Hfamily$, update rule for learning rate $\alpha$, number of iterations $n_{iter}$, loss function $l$
	\OUTPUT Balancing weight vector $\bw$ for $\DS$ 

	\STATE  $\lab \gets \big[\underbrace{0,0,\ldots,0}_{n\text{-times}},\underbrace{1,1,\ldots,1}_{n'\text{-times}}\big]$
	\STATE $\bw \gets \big[\underbrace{1,1,\ldots,1}_{n\text{-times}},\underbrace{1,1,\ldots,1}_{n'\text{-times}}\big]$
	\STATE $w_i \gets \frac{n}{n'}w_i \,\,\,,\forall  i > n$ \COMMENT{equal class importance}
	\FOR{$n_\text{iter}$ iterations}
			\STATE  $\widehat{\bc} \gets {\text{get\_predictions}}(\Hfamily, \left[S,T \right],\lab,\bw)$
			\STATE  $w_i \gets w_i \exp\big(\alpha_i \cdot l(\widehat{c}_i,0)\big) \,\,\,,\forall  i \leq n$ 
			\STATE $w_i \gets n \frac{w_i}{\sum_{j \in \idx_a} w_j}  \,\,\,,\forall i \leq n $
	\ENDFOR
	\STATE {\bf return} $\bw[i\leq n]$ 
	\end{algorithmic}
\end{algorithm}
Only the weights for $\DS$ are updated, while weights for the sample units in $\DT$ are constantly set to $1$. In each iteration the sum of weights in $\DS$ equals $n$, ensuring the same importance with respect to the discriminator loss.  The predictions of the discriminator (Step 6 in Algorithm \ref{alg:euclid}) should preferably be obtained with cross validation, to better approximate the generalization error in Equation \ref{eq:expectedBalancingLoss}.

The choice of the classifier’s family and its hyper parameters is important to enable us to approximate the minimal loss defined in Equation \ref{eq:expectedBalancingLoss} with the empirical loss in Equation \ref{eq:nonParametricEmpiricaLoss}. On the one hand, we would like to reduce the estimation error of $\lossn$ due to over-fitting of the classifier. On the other hand, the family of classifiers should be rich enough to distinguish between "non-similar" (weighted) datasets. In Section \ref{sec:experiments}, we describe our experiments with different classification algorithms, ranging from the low-capacity logistic regression to the large-capacity class of neural networks.  We test the ability of our framework to tackle the challenge of bias-variance trade-off by applying a preliminary step of hyper-parameter selection using cross-validation, prior to running Algorithm \ref{alg:euclid}. 

\section{Theoretical results}
In this section we provide theoretical results for the 0-1 loss function, formulating the link between the classifier family and the estimation error.  We start with introducing the two-sample divergence measure induced by the discriminator error, namely the $\Hfamily$-divergence. We use this divergence measure in the bound we provide for the estimation error. 

\subsection{The two-sample $\Hfamily$-divergence}
Let $\Hfamily$ denote the family of binary classifiers $h : X \rightarrow {0,1}$ considered by the discriminator.  Similar to \cite{kifer2004detecting,ben2007analysis, ben2010theory} we define the $\Hfamily$-divergence between $\wS$ and $T$ as:
\begin{eqnarray}
\label{eq:hdiv}
	\dHapprox(\bw) = 2 \max_{h\in \Hfamily} \left\vert    \frac{1}{n} \sum_{i=1}^n w h(x_i) -    \frac{1}{n'}  \sum_{i=n+1}^{n+n'} h(x_i)       \right\vert  \,. 
\end{eqnarray}

That is, the $\Hfamily$-divergence relies on the capacity of the hypothesis class $\Hfamily$ to distinguish between examples from $\wS$ and $T$.   Adapting a result from \cite{ben2007analysis}, the following lemma links  the $\Hfamily$-divergence between $\wS$ and $T$, with the minimal error of the  discriminator that attempts to classify them, as defined in Equation \ref{eq:nonParametricEmpiricaLoss}.

\begin{lem}
\label{lem:hdiv_Ln}
If $\Hfamily$ is symmetric, that is, for every $h \in \Hfamily$, the inverse
hypothesis $1 - h$ is also in $\Hfamily$, and the loss function $l$ in  is the $0-1$ loss, then 
\[ \dHapprox(\bw) = 2 \left[ 1- \min_{h \in \Hfamily}\lossn(\bw, h) \right] \]
\end{lem}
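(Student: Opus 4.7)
The plan is to unpack the 0-1 loss in the definition of $\lossn$ and show that $1 - \lossn(\bw, h)$ is (up to a sign) the signed quantity whose absolute-value maximum appears in $\dHapprox(\bw)$. The symmetry assumption on $\Hfamily$ then converts the signed maximum into the absolute-value maximum.

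First I would substitute the 0-1 loss. Since $h(x) \in \{0,1\}$, we have $l(h(x),1) = 1 - h(x)$ and $l(h(x),0) = h(x)$, so
\begin{equation*}
\lossn(\bw, h) \;=\; \frac{1}{n'}\sum_{i=n+1}^{n+n'}\bigl(1 - h(x_i)\bigr) \;+\; \frac{1}{n}\sum_{i=1}^n w_i\, h(x_i) \;=\; 1 \;-\; \left[\frac{1}{n'}\sum_{i=n+1}^{n+n'} h(x_i) \;-\; \frac{1}{n}\sum_{i=1}^{n} w_i\, h(x_i)\right].
\end{equation*}
Taking the min over $h \in \Hfamily$ of the left side is equivalent to taking the max over $h$ of the bracket on the right, so
\begin{equation*}
1 - \min_{h\in\Hfamily} \lossn(\bw, h) \;=\; \max_{h\in\Hfamily} \left[\frac{1}{n'}\sum_{i=n+1}^{n+n'} h(x_i) \;-\; \frac{1}{n}\sum_{i=1}^{n} w_i\, h(x_i)\right].
\end{equation*}

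Next I would use the symmetry of $\Hfamily$ together with the normalization $\tfrac{1}{n}\sum_i w_i = 1$ to turn this signed maximum into the absolute-value maximum that defines $\dHapprox(\bw)$. For any $h \in \Hfamily$, the hypothesis $h' = 1 - h$ also lies in $\Hfamily$, and plugging it into the bracket gives
\begin{equation*}
\frac{1}{n'}\sum_{i=n+1}^{n+n'}(1 - h(x_i)) \;-\; \frac{1}{n}\sum_{i=1}^{n} w_i (1 - h(x_i)) \;=\; \left(1 - \tfrac{1}{n}\sum_i w_i\right) \;-\; \left[\frac{1}{n'}\sum_{i=n+1}^{n+n'} h(x_i) \;-\; \frac{1}{n}\sum_{i=1}^{n} w_i\, h(x_i)\right],
\end{equation*}
and by the normalization the first parenthesis vanishes. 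Hence for every $h \in \Hfamily$ there is some $h' \in \Hfamily$ realizing the negation of the bracketed quantity, so the maximum of the signed quantity over $\Hfamily$ equals the maximum of its absolute value.

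Combining the two displays yields
\begin{equation*}
1 - \min_{h\in\Hfamily}\lossn(\bw, h) \;=\; \max_{h\in\Hfamily}\left\vert \frac{1}{n}\sum_{i=1}^n w_i h(x_i) - \frac{1}{n'}\sum_{i=n+1}^{n+n'} h(x_i)\right\vert \;=\; \tfrac{1}{2}\,\dHapprox(\bw),
\end{equation*}
which rearranges to the claimed identity. The only substantive step is the symmetry/normalization argument: everything else is direct substitution. I expect that step to be the cleanest place for a slip, since it quietly relies on both hypotheses of the lemma (symmetry of $\Hfamily$ and $\sum_i w_i = n$) simultaneously, and the normalization constraint has to be imported from the overall problem setup rather than the statement itself.
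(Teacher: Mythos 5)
Your proof is correct and follows the standard argument (the one the paper adapts from Ben-David et al.): substitute the 0-1 loss to write $\lossn(\bw,h)$ as $1$ plus the signed gap, then use symmetry of $\Hfamily$ together with the normalization $\frac{1}{n}\sum_i w_i = 1$ to replace the signed maximum by the absolute-value maximum. You are also right to flag that the normalization constraint must be imported from the problem setup (the simplex constraint on $\bw/n$); with that noted, the argument is complete and matches the paper's route.
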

\begin{proof}
See appendix.
\end{proof}
In the following we assume that $\Hfamily$ is symmetric. 

\subsection{Bounds for estimation error }
Suppose that $\DS$ are assigned with labels $\{y_i\}_{i=1}^n$ corresponding to the observed treatment outcomes  $\cY{a}$.  Denote $\foutcome(X) \equiv \expect{\DT}{\cY{a}|X}$. In this section we provide a bound for the estimation error for the case where $\foutcome(\bx)$  is bounded, with $\Mfoutcome= \sup_\bx |\foutcome(\bx)|$. The estimation error can be decomposed to two sources of error by adding and subtracting terms, using $\expect{\DT}{\cY{a}} = \expect{\DT(X)}{\foutcome(X)} $ and the triangle inequality:
\begin{eqnarray}
\label{eqn:errdecomposition1}
\left\vert  \frac{1}{n}\sum_{i=1}^n w_i y_i- \expect{X \sim \DT}{\cY{a}} \right\vert  \leq &&  \\
&&\hspace{-80pt}
\left\vert  \frac{1}{n}\sum_{i=1}^n w_i y_i  - \frac{1}{n'} \sum_{i=n+1}^N \foutcome(x_i) \right\vert \nonumber \\ 
&&\hspace{-80pt}
+\left\vert    \frac{1}{n'} \sum_{i=n+1}^N \foutcome(x_i) -  \expect{X \sim \DT}{\foutcome(X)}  \right\vert \nonumber 
\end{eqnarray}
The second term, which does not depend on the weights $\bw$, relates to the approximation of the expected value of $\foutcome(X)$ by a sample mean of $\foutcome(\bx)$. Following Hoeffding's inequality it is bounded by $2 \Mfoutcome \sqrt{\frac{\ln\frac{2}{\delta}}{2n’}}$ with probability $1-\delta$.
For the remaining of this section we focus on bounding the first term, which involves the weights $\bw$. We start by decomposing this term using the triangle inequality
\begin{align}
\label{eqn:errdecomposition}
& \left\vert  \frac{1}{n}\sum_{i=1}^n w_i y_i- \frac{1}{n'} \sum_{i=n+1}^N \foutcome(x_i) \right\vert  \leq \left\vert  \frac{1}{n}\sum_{i=1}^n w_i (y_i-\foutcome(x_i))  \right\vert  \nonumber \\
&\hspace{30pt}
+\left\vert  \frac{1}{n}\sum_{i=1}^n w_i \foutcome(x_i) - \frac{1}{n'} \sum_{i=n+1}^N \foutcome(x_i) \right\vert 
\end{align}
The first term in Equation \ref{eqn:errdecomposition}  depends on the variability of the outcome $\cY{a}$ \emph{given} $X$, as well as on the weights $\bw$.   We will address this term in Theorem~\ref{thm:bound} below. The second term in this equation depends on the difference between the weighted average of $\foutcome(X)$ on $S$ and the unweighted average of $\foutcome(X)$ on $T$.  Following Equation~\ref{eq:hdiv}, if $\foutcome \in \Hfamily$ then this term is smaller or equal to half of the $\Hfamily$-divergence.  The following lemma extends this observation for a larger family than $\Hfamily$, noted as $\Comb(\Hfamily)$, which contains all functions that can be represented by a bounded linear combination of
members in $\Hfamily$.  More formally we define this larger family as $\Comb(\Hfamily)=
\{ f :  f = \sum_j  \alpha_j h_j(x) \text{ s.t. } h_j \in \Hfamily \text{  and } \sum_j |\alpha_j| \leq \Mfoutcome \}$.
\begin{lem}
Suppose that  $\foutcome \in  \Comb(\Hfamily)$. Then for every $S$ and $\bw$,
\label{lem:dh_g}
\begin{align*}
\left| \frac{1}{n}\sum_{i=1}^n w_i \foutcome(x_i)  - \frac{1}{n'}\sum_{i=n+1}^N \foutcome(x_i)\right| \leq \frac{\Mfoutcome}{2} \dHapprox(Sw, T)
\end{align*} 
\end{lem}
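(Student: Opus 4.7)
The plan is to exploit the linearity of the expression in $\foutcome$ together with the triangle inequality, and then invoke the definition of the $\Hfamily$-divergence on each constituent classifier.

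First I would fix a representation of $\foutcome$ guaranteed by the assumption $\foutcome \in \Comb(\Hfamily)$: namely, write $\foutcome(x) = \sum_j \alpha_j h_j(x)$ with each $h_j \in \Hfamily$ and $\sum_j |\alpha_j| \leq \Mfoutcome$. Substituting this into both empirical averages and rearranging, the quantity to bound becomes
\begin{align*}
\frac{1}{n}\sum_{i=1}^n w_i \foutcome(x_i) - \frac{1}{n'}\sum_{i=n+1}^N \foutcome(x_i)
= \sum_j \alpha_j \left( \frac{1}{n}\sum_{i=1}^n w_i h_j(x_i) - \frac{1}{n'}\sum_{i=n+1}^N h_j(x_i) \right).
\end{align*}

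Next I would apply the triangle inequality over the sum on $j$, pulling $|\alpha_j|$ outside. For each $j$, the inner difference is exactly one of the quantities whose maximum (up to a factor of $2$) defines $\dHapprox(\bw)$ in Equation~\ref{eq:hdiv}. Therefore every such inner difference is bounded in absolute value by $\tfrac{1}{2}\dHapprox(\bw)$, a bound that does not depend on the index $j$. This lets me factor $\tfrac{1}{2}\dHapprox(\bw)$ out of the sum and obtain the bound $\bigl(\sum_j |\alpha_j|\bigr)\cdot \tfrac{1}{2}\dHapprox(\bw)$.

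Finally I would use the hypothesis $\sum_j|\alpha_j|\leq \Mfoutcome$ to conclude $\bigl|\tfrac{1}{n}\sum w_i\foutcome(x_i)-\tfrac{1}{n'}\sum \foutcome(x_i)\bigr|\leq \tfrac{\Mfoutcome}{2}\dHapprox(\bw)$, as claimed. There isn't really a hard step here; the only place one might worry is that the decomposition of $\foutcome$ may not be unique, but uniqueness is irrelevant since the assumption only asks for the existence of \emph{some} decomposition satisfying the norm bound, and the argument works identically for any such choice. The role of symmetry of $\Hfamily$ (from Lemma~\ref{lem:hdiv_Ln}) is not needed in this proof; only the definition in Equation~\ref{eq:hdiv} is invoked.
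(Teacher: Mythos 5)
Your proof is correct and is essentially the argument the paper intends: the definition of $\Comb(\Hfamily)$ is set up precisely so that one expands $\foutcome=\sum_j \alpha_j h_j$, applies the triangle inequality, bounds each per-classifier discrepancy by $\tfrac{1}{2}\dHapprox$ via Equation~\ref{eq:hdiv}, and concludes with $\sum_j|\alpha_j|\leq \Mfoutcome$. Your side remarks (non-uniqueness of the decomposition being irrelevant, and symmetry of $\Hfamily$ not being needed here) are also accurate.
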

\begin{proof}
See appendix.
\end{proof}
Lemma \ref{lem:dh_g} leads to the following bound:
\begin{thm}
\label{thm:bound}
Given $\bw$ and $S=\{ x_i \}_{i=1}^n$, If $\foutcome \in \Comb(\Hfamily)$ then
for any $\delta \in (0,1)$, with probability of at least $1-\delta$ we have
\begin{eqnarray*}
\left\vert  \frac{1}{n}\sum_{i=1}^n w_i y_i- \frac{1}{n'} \sum_{i=n+1}^N \foutcome(x_i) \right\vert 
 \leq && \\ 
 &&  \hspace{-100pt}
 \frac{\Mfoutcome}{2} \dHapprox(Sw, T) + 2 \Mfoutcome \sqrt{2 \norm{\frac{\bw}{n}}_2^2 \ln\frac{2}{\delta}}  
\end{eqnarray*}
\end{thm}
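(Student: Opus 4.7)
The plan is to apply the triangle-inequality decomposition already written in Equation~\ref{eqn:errdecomposition}, and handle the two resulting terms separately. The second term, $\left|\frac{1}{n}\sum_i w_i \foutcome(x_i) - \frac{1}{n'}\sum_{i=n+1}^N \foutcome(x_i)\right|$, is controlled immediately by Lemma~\ref{lem:dh_g}, which gives $\frac{\Mfoutcome}{2}\dHapprox(Sw,T)$; this contributes the first summand of the theorem. All remaining work concerns the noise term $\left|\frac{1}{n}\sum_i w_i(y_i-\foutcome(x_i))\right|$.

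To bound the noise term I would condition on the source covariates $\{x_i\}_{i=1}^n$, the treatment assignments $\{a_i\}_{i=1}^n$, and the target sample $T$; this conditioning fixes the weight vector $\bw$ (a function of $S$ and $T$ produced by the algorithm) and every value $\foutcome(x_i)$, leaving only the observed outcomes $y_i = \cY{a_i}$ random. Under strong ignorability, $\expect{}{y_i \mid x_i, A_i=a_i} = \expect{}{\cY{a_i}\mid x_i} = \foutcome(x_i)$, so the centered variables $\xi_i := y_i - \foutcome(x_i)$ are conditionally independent and mean zero. Together with the (implicit) boundedness $|y_i|\le \Mfoutcome$, each $\xi_i$ lies almost surely in $[-2\Mfoutcome, 2\Mfoutcome]$.

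Next, I would apply Hoeffding's inequality to the weighted sum $\sum_{i=1}^n \frac{w_i}{n}\xi_i$. Each summand is supported on an interval of length $\frac{4 w_i \Mfoutcome}{n}$, so Hoeffding yields a tail bound of $2\exp\!\left(-t^2\big/\big(8\Mfoutcome^2\, \norm{\bw/n}_2^2\big)\right)$. Equating this probability to $\delta$ and solving for $t$ gives $t = 2\Mfoutcome\sqrt{2 \norm{\bw/n}_2^2 \ln(2/\delta)}$, which is exactly the second summand in the claimed bound. Combining this high-probability bound with the deterministic bound from Lemma~\ref{lem:dh_g} via the triangle inequality yields the theorem.

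The main subtlety I expect is justifying the mean-zero property of $\xi_i$ carefully: strong ignorability is what reconciles $\foutcome$, defined as an expectation under $\DT$, with the conditional expectation of the $y_i$ actually drawn from the treatment-$a$ subsample of $\DS$. A secondary point worth flagging is the boundedness assumption on $y_i$ itself, which must be strong enough to force the range $[-2\Mfoutcome, 2\Mfoutcome]$ used in Hoeffding; this is stronger than the stated hypothesis $\sup_x|\foutcome(x)|\le \Mfoutcome$, which only bounds the conditional mean.
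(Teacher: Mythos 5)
Your proposal matches the paper's proof: the same decomposition via Equation~\ref{eqn:errdecomposition}, Lemma~\ref{lem:dh_g} for the balance term, and Hoeffding's inequality applied to the weighted, conditionally centered residuals with ranges $4w_i\Mfoutcome/n$, yielding the identical constant $2\Mfoutcome\sqrt{2\norm{\bw/n}_2^2\ln(2/\delta)}$. You are in fact more explicit than the paper on two points it leaves implicit --- the role of strong ignorability in centering $y_i-\foutcome(x_i)$, and the need for $|y_i|\le\Mfoutcome$ rather than merely $\sup_\bx|\foutcome(\bx)|\le\Mfoutcome$ --- and both observations are correct.
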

\begin{proof}
	Define a set of random variables $Z_i \sim \dist(w_i(\cY{a}-\foutcome(x_i))|x_i)$. Each $Z_i$ is bounded by $2w_iM_Y$. 
	Applying Hoeffding's inequality \cite{mohri2018foundations}  yields that
for any $\delta \in (0,1)$, with probability at least $1-\delta$
$$\left\vert  \frac{1}{n}\sum_{i=1}^n w_i y_i - \frac{1}{n} \sum_{i=1}w_i \foutcome(x_i) \right\vert  < 2 \Mfoutcome \sqrt{2 \norm{\frac{\bw}{n}}_2^2 \ln\frac{2}{\delta}} $$
See appendix for details.
Following Equation \ref{eqn:errdecomposition},  this inequality together with Lemma \ref{lem:dh_g} provides the desired proof.
\end{proof}

The first term in the bound given in Theorem~\ref{thm:bound} corresponds to the $\Hfamily$-divergence, which is the objective that our weights generator aims to minimize.  This implies a tradeoff induced by selection of the discriminator.
Using a rich family of classifiers allows to have a good approximation of the function family $\Comb(\Hfamily)$. On the other hand a compact family leads to low $\Hfamily$-divergence and allows to avoid overfitting. Note that the empirical $\Hfamily$-divergence can be examined after running the algorithm and thus provide an indication of potential errors.

The second term in this bound is dominated by $\norm{\frac{\bw}{n}}_2^2$, indicating the variability in the weights. Observe that $\frac{\bw}{n} \in \Delta$ and that $\min_{\bu \in \Delta} \norm{\bu}_2^2 = \frac{1}{n}$. This minimum is obtained for $\bu^* = \frac{\be}{n}$, where $\be = (1,\dots,1)$. Therefore,
when the weights are close to uniform this term converges at the rate of $\sqrt{n}$.  Therefore it is desirable to maintain low variability of weights. Note that this variability is bounded by $\norm{\bu}_2^2 \leq K\! L(\bu, \frac{\be}{n})$, where $K\! L$ is the Kullback-Leibler divergence \cite{shalev2012online, beck2003mirror} (see supplemental material).  The exponentiated gradient ascent, which maintains the normalization constraint of the computed weights, has a desired property of generating weights with minimal Kullback-Leibler divergence to previous weights \cite{kivinen1997exponentiated}. Therefore, our algorithm is expected to produce weights that remain as close as possible (in an entropy sense) to the initial uniform weights. 

In the supplemental material we provide a bound for the general case in which $\foutcome$ is not necessarily in $\Comb(\Hfamily)$. In this case the bound includes an additional term corresponding to a proximity measure of $\foutcome$ to $\Comb(\Hfamily)$.  

\section{Related work}
Inverse propensity weighting (IPW) \cite{rosenbaum1987model} is a widely-used balancing method that models the conditional treatment probability given pre-treatment covariates. If the model is correctly specified, then the computed weights are balancing \cite{horvitz1952generalization}. However, a misspecified model may generate weights that fail to balance the biases, potentially leading to erroneous estimations. In recent years, various methods were developed to  generate weights that directly minimize the different objectives used to measure the discrepancy between compared populations (e.g., \cite{hainmueller2012entropy,graham2012inverse,imai2014covariate,zubizarreta2015stable,chan2016globally,kallus2017balanced,zhao2016covariate}). Each of these methods provides alternative solutions to the following elementary problems: $(i)$ how to measure the bias between two distributions, and $(ii)$ how to generate weights that minimize it. Some of the methods, (e.g., \cite{graham2012inverse} and \cite{imai2014covariate}), fit propensity score models with balance constraints, to guarantee that even if the propensity model is misspecified, these balancing constraints are met. The other algorithms, including the one presented here, are designed to minimize a selected imbalance measure without considering the related propensity scores. 

A widely used criterion for assessing the imbalance between two treatment groups is the {\it standardized difference} in the mean of each covariate \cite{rosenbaum1985constructing}. Many of the algorithms, such as \cite{hainmueller2012entropy,imai2014covariate,chan2016globally}, focus on minimizing the difference between the first-order moments of the covariates or their transformations. The algorithm in \cite{kallus2017balanced} uses the {\it maximum mean discrepancy} (MMD) measure (see \cite{gretton2007kernel} for definition), which can account for an infinite number of higher order moments based on kernel methods. Very recently, an independent study \cite{kallus2018deepmatch} presented a similar idea of using GANs to generate balancing weights. However, the discrepancy objective, the weights model, and the entire algorithm in this study differ from the ones we introduce in this paper.

The problem of finding balancing weights has been studied in the field of density ratio estimation \cite{sugiyama2012densitybook, sugiyama2012densitypaper, mohamed2016learning}.  A similar problem has also been studied in the context of transfer learning under the assumption of covariate shift, where the task is to learn a prediction model from a labeled training data drawn from a source domain different from the target domain \cite{huang2007correcting, sugiyama2008direct,mansour2009domain}.

Our algorithm, and the other weighting methods we reviewed above, balance covariates without using outcome data. The resulting balanced data may be used for subsequent causal inference analysis involving multiple outcomes. Other state-of-the-art methods for causal inference, such as BART \cite{chipman2010bart} and Causal Forests \cite{wager2017estimation} focus on training outcome models that allow causal inference for a specific outcome. Finally, there are causal inference methods that combine a treatment assignment model with an outcome model, such as the augmented inverse probability weighting (AIPW) \cite{robins1994estimation,scharfstein1999adjusting,robins2000robust,glynn2010introduction}.  
Recent works in causal inference that took this approach use deep neural networks for learning a new representation of the data that improves outcome prediction on one hand, and on the other hand minimizes the discrepancy between the source and target data \cite{johansson2016learning,shalit2017estimating}. The approach of learning a representation that minimizes the discrepancy between source and target domains in an adversarial manner, while optimizing label prediction, has recently become very popular in transfer learning, with vast applications in computer vision \cite{ganin2016domain, tzeng2017adversarial}. 

\section{Experiments}\label{sec:experiments}

We evaluated our adversarial weighting method  on three previously published benchmarks of simulated data by "plugging-in" various classifiers.
 We compared our method to IPW with the same classifier, and tested against more recent methods for balancing weights. We focused on methods that do not use information on the outcome for estimating the weights.

\subsection{Experimental setting}
The results reported in this section are based on the zero-one loss function. We considered the following "plug-in" classifiers as the discriminator:
{\bf LR}: Logistic regression (default parameters by \texttt{Scikit-learn});
{\bf SVM}: a support vector machine with RBF kernel (default parameters by \texttt{Scikit-learn});
{\bf MLP}: a multilayer perceptron with 1-3 internal layers.  The number of nodes in each internal layer is set to twice the number of variables in the input layer. The exact number of internal layers is selected  as the one that minimizes the zero-one prediction error (generalization error) evaluated in a 5-fold cross-validation procedure;
{\bf LR/SVM/MLP}: a classifier that is selected from the previously described classifiers as the one minimizing the zero-one prediction error evaluated in a 5-fold cross-validation procedure.

Note that for the classifiers MLP and LR/SVM/MLP, the configuration is set once before running the weighting algorithms.
We used a decaying learning rate $\alpha$ in Algorithm \ref{alg:euclid}: $\alpha_{t+1} = \frac{1}{1+0.5 \cdot t}$ and limited the number of iterations $T$ to 20. Finally, to speed running times we configured the function get\_predictions in Step 5 of Algorithm \ref{alg:euclid} to return train predictions. 

We compared the results of Algorithm \ref{alg:euclid} to the results obtained by the following weighting methods:
{\bf IPW}: The straightforward inverse propensity weighting, without weight trimming or other enhancements. We tested IPW with the same classifiers we used for the adversarial algorithm;
{\bf CBPS}: Covariate Balancing Propensity Score (CBPS) \cite{ imai2014covariate}, using its R package \cite{fong2014cbps};
{\bf EBAL}: Entropy balancing \cite{hainmueller2012entropy}, using its R package \cite{hainmueller2014package};
{\bf MMD-V1}, {\bf MMD-V2}: An algorithm for minimizing the maximum mean discrepancy (MMD) measure using an RBF kernel \cite{kallus2016generalized, kallus2017balanced}. In MMD-V1 the RBF scale parameter was set to 1. MMD-V2 includes a preliminary step for selecting the RBF scale and a regularization parameter \cite{kallus2016generalized}. We implemented MMD-V1 and MMD-V2 using the \texttt{quadprog} Python package.

\begin{figure*}[th]
	\subfloat[Kang-Schafer benchmark]
	{
		
		\includegraphics[scale=.47]{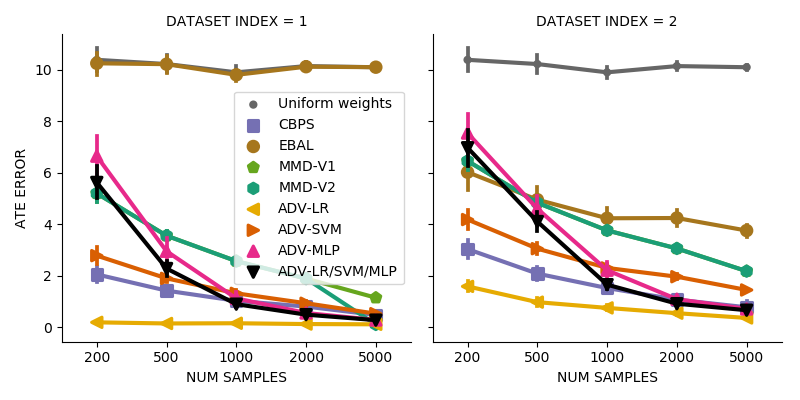}
	}
	\subfloat[Circular benchmark]
	{
		\includegraphics[scale=.46]{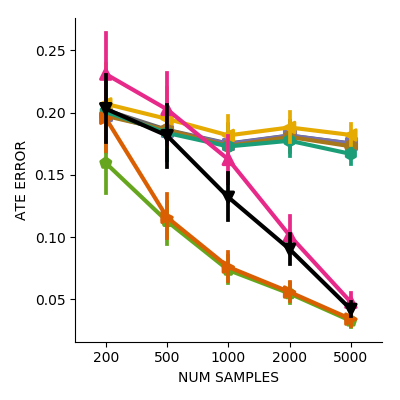}
	} 
	\caption{Comparison of  weighting algorithms: CBPS, EBAL, MMD and the adversarial algorithm. We compare the adversarial algorithm with two different classifiers: logistic regression (LR) and LR/SVM/MLP.  The latter corresponds to the adversarial algorithm with a preceding step of model selection from (i) logistic regression (LR), 
		(ii) support vector machine with RBF kernel (SVM), and (iii) multi-layer perceptrons MLP. MLP corresponds to MLPS with 1/2/3 layers, respectively chosen by cross-validation. 
		Horizontal lines represent 95\% confidence intervals computing using bootstrapping. }
	\label{fig:classifier_cmp}
\end{figure*}

\begin{figure}[t]
	\hspace{-40pt}
	\includegraphics[scale=.34]{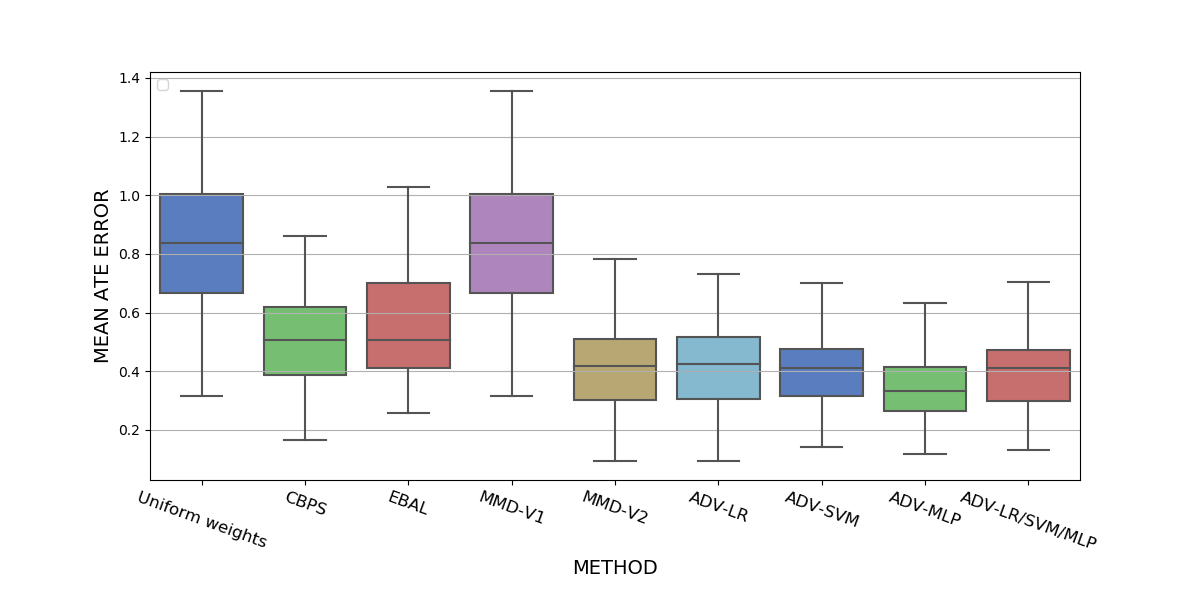}
	\caption{ Comparison on ACIC benchmark}
	\label{fig:acic_cmp}
\end{figure}

\subsection{Benchmarks}

We evaluated and compared the different weighting methods on the following benchmarks:

{\bf Kang-Schafer benchmark}  \cite{kang2007demystifying}: The data includes four independently normally distributed covariates: $X_1, X_2, X_3, X_4 \sim N(0,1)$. The outcome covariate $Y$ is generated as $Y=210+27.4 X_1 +13.7 X_2 +13.7 X_3 +13.7 X_4 +\epsilon$ where $\epsilon \sim N(0,1)$.  The true propensity score is $p(A=1|X_1, X_2, X_3, X_4) = \expit(-X_1+0.5 X_2 -0.25 X_3 -0.1 X_4)$ . The outcome $Y$ is observed only for $A=1$. The simulation includes two scenarios. In the first, the covariates $(X_1, X_2, X_3, X_4)$ are observed, while in the second the covariates actually seen, $(X'_1, X'_2, X'_3, X'_4)$, are generated as:  $X'_1 = \exp(X_1/2)$, $X'_2= X_2 / (1+\exp(X_1)) +10$, $X'_3 = (X_1*X_3/25 + 0.6)^3$, and  $X'_4 = (X_2+X_4+20)^2$. As $Y$ is observed only for a biased selection of the data, the task in this benchmark is to estimate the expected potential outcome $E(Y_1)$ for the entire population.  In this case we apply Algorithm \ref{alg:euclid} once to balance the subpopulation of $A=1$ with the entire population. We generated 5 paired datasets, where each pair corresponds to the 2 scenarios, for data size  $n=200, 500, 1000, 2000, 5000$. Each of the datasets includes 100 random replications. Paired datasets are based on the same randomized covariates $(X_1, X_2, X_3, X_4)$.

{\bf Circular benchmark}: These simulations are based on the example given in \cite{kallus2016generalized} with a minor modification to accommodate estimation of ATE. The simulations follow a scenario with two covariates $X_1$ and $X_2$ independently drawn from a uniform distribution on $[-1, 1]$. The true propensity score is $p(A=1|X_1, X_2) = 0.95/(1+\frac{3}{\sqrt{2}}{{\lVert (X_1, X_2) \rVert}_2})$. The potential outcomes $Y^0$ and $Y^1$ are independently normally distributed with means $\lVert X \rVert_2^2-X_1/2-X_2/2$ and $\lVert X \rVert_2^2$, respectively, and with a standard deviation of $\sqrt{3}$. We generated 5 datasets, each with 100 random replications, for this scenario with data size  $n=[200, 500, 1000, 2000, 5000]$.

{\bf ACIC benchmark}: The Atlantic Causal Inference Conference (ACIC) benchmark  \cite{ACIC} includes 77 datasets,  simulated with different treatment assignment mechanisms and outcome models. All the datasets use the same 58 covariates with 4802 observations derived from real-world data. These simulations accounted for various parameters, such as degrees of non-linearity, percentage of patients treated, and magnitude of the treatment effect.  Each of the 77 datasets includes 100 random replications independently created by the same data generation process, yielding 7700 different realizations in total. For a complete description of this benchmark, see \cite{ACIC}.

\subsection{Results}

For all considered classifiers, the adversarial algorithm outperformed its counterpart IPW in most of the tests, in particular on the large sample size (see Figure A.1 in the supplemental material ).

Figure \ref{fig:classifier_cmp} shows the results of Algorithm \ref{alg:euclid}, CBPS, EBAL, MMD-V1 and MMD-V2 on the Kang-Schafer and Circular benchmarks.  
As a reference, we selected two classifiers for the adversarial algorithm: LR being the simplest classifier and LR/SVM/MLP for its ability to adapt to the data.
As shown, in the Kang-Schafer benchmark, ADV-LR outperforms CBPS, EBAL, and both versions of MMD.  In the Circular benchmark, MMD-V1 and ADV-SVM outperformed all compared methods, possibly because it employs Gaussian kernels that can handle the circular contours of the propensity function. Note that the performance of all classifiers improves with data size, and becomes more similar in the final point (n=5000).

Figure \ref{fig:acic_cmp} presents the results on the ACIC benchmark. These results also support our previous observation that the adversarial framework is better at exploiting classifiers than the IPW method. This plot also provides some evidence for the robustness of the cross-validation procedure, as ADV-LR/SVM/MLP steadily remains one of the top-performing methods.  Finally, even our weakest variant ADV-LR exhibited performance superior to CBPS and EBAL, and results comparable to MMD-V2. 

We see that each benchmark had a different classifier that obtained the best results in the adversarial framework, with ADV-LR excelling in the Kang-Schafer benchmark, ADV-SVM in the Circular benchmark, and  ADV-MLP in ACIC. However, in all three benchmarks, LR/SVM/MLP was the second-best performing classifier, suggesting that it is more robust in unknown scenarios.  

\section{Discussion}
We introduced an adversarial framework for generating balancing weights, which uses a classifier family for measuring the discrepancy between two data samples, and exponentiated gradient ascent, to compute weights that minimize this divergence.  Our theoretical results for the estimation error provide further motivation for (i) obtaining weights that maximize the minimal classification error with 0-1 loss for the two samples; and (ii) using exponentiated gradient descent for generating normalized weights that optimize this objective while keeping themselves close to uniform. Our experimental results provide additional support for the effectiveness of exponentiated gradient descent in generating weights that lead to estimates with smaller variance. This setup allows us to easily plug-in a plethora of classification algorithms, each corresponding to a different classifier family,  into our framework.  

The selection of the classifier family clearly affects the sensitivity of the discriminator in identifying biases between the samples. Low capacity classifier families may weaken the ability of the discriminator to distinguish important biases, while higher capacity families will result in large discrepancy measures even when the samples are balanced.  Note that a classification algorithm may use an objective different than the discriminator's error for selecting the best classifier. In this case we assume it may still be used in practice under the assumption that the selected classifier highly correlates with the classifier that would have been selected by the discriminator.  In particular, classifiers that incorporate regularization in their objective will have less over-fitting for stronger regularization, yielding a reduced sensitivity of the discriminator, and consequently smaller discrepancy measures.  

Selecting a classifier family with an appropriate sensitivity level is a challenging task. We applied a heuristic that uses cross-validation to select the classification algorithm with the lowest estimated generalization error, under the assumption that models with larger generalization error may be either over-sensitive when the error is due to over-fitting,  or not sensitive enough when the error is due to large bias. The experiments we conducted on different benchmarks may provide a support for this approach,  as the adversarial algorithm with auto-select classifier always ranked second. A future research direction is to improve classifier selection so it reaches comparable results to the first ranked classifier.

The discrepancy measure induced by the discriminator is determined not only by the choice of the classifier family but also by the loss function. When the log loss is used and the family of classifier has enough capacity, then the induced discrepancy measure approximates the Jensen–Shannon divergence \cite{goodfellow2014generative}.   The discrepancy measures induced by the 0-1 loss, can be viewed as {\it integral probability metrics} (IPMs) , which are defined with respect to a family, $\Ffamily$, of real-valued bounded functions \cite{sriperumbudur2012empirical}: $\text{IPM}_{\Ffamily}(\DS, \DT) \equiv \left| \sup_{f \in \Ffamily} \expect{X\sim \DS}{f(X)} - \expect{X\sim \DT}{f(X)} \right|$.  Discrepancy measures that can be presented as IPMs include the Wasserstein distance (also known as Earth-Mover distance) and MMD \cite{sriperumbudur2012empirical}.  There are extensions of GANs where the discriminator is replaced by  a two sample-test corresponding to the Wasserstein distance \cite{arjovsky2017wasserstein} and the MMD distance \cite{li2015generative, dziugaite2015training,li2017mmd}.  A future work would be to adapt our framework for estimating these discrepancy measure and minimize them with exponentiated gradient descent.

\medskip

\small

\bibliography{ref}
\bibliographystyle{icml2019}

\end{document}